\documentclass[runningheads]{llncs}

\usepackage{graphicx}
\usepackage{booktabs}
\usepackage{multirow}
\usepackage{amsmath,amssymb,mathtools,bm}
\usepackage{xcolor}
\usepackage{placeins}
\usepackage{cite}
\usepackage[pagebackref,breaklinks,colorlinks]{hyperref}

\makeatletter
\def\fps@figure{!tbp}
\def\fps@table{!tbp}
\makeatother
\newcommand{\R}{\mathbb{R}}
\newcommand{\sig}{\sigma}
\newcommand{\argminop}{\mathop{\rm arg\,min}}
\newcommand{\diag}{\operatorname{diag}}
\providecommand{\etal}{\emph{et al.}}

\newcommand{\safeincludegraphics}[2][]{%
  \IfFileExists{#2}{%
    \includegraphics[#1]{#2}%
  }{%
    \fbox{\begin{minipage}[c][0.18\textheight][c]{0.86\linewidth}
      \centering Missing figure: \texttt{#2}
    \end{minipage}}%
  }%
}

\newcommand{\safetableinput}[1]{%
  \IfFileExists{#1}{%
    \input{#1}%
  }{%
    \begin{table}[t]
      \centering
      \small
      \fbox{\begin{minipage}{0.86\linewidth}
      Missing table file: \texttt{#1}.
      \end{minipage}}
      \caption{Placeholder table.}
    \end{table}%
  }%
}

\begin{document}

\title{S-GAI: Spectral Geometry-Aware Initialization for Sigmoidal MLPs \\ From Dataset Geometry to Network Weights}
\titlerunning{S-GAI: From Dataset Geometry to Network Weights}

\author{Yi-Shan Chu}
\authorrunning{Yi-Shan Chu}

\institute{easonchu@stat.sinica.edu.tw \\ 
Institute of Statistical Science, Academia Sinica, Taipei, Taiwan (R.O.C.)}

\maketitle
\begin{abstract}
Classical universal approximation theorems establish the expressive power of sigmoidal multilayer perceptrons, but they do not prescribe how initial weights should encode the geometry of a data distribution. We propose S-GAI, a spectral geometry-aware initialization framework for one-hidden-layer sigmoidal MLPs. Starting from the constructive idea that sigmoid units can act as smooth half-space gates, we move from hand-specified planar geometry to class-wise spectral geometry estimated from image data. For each class, SVD provides a mean, principal directions, and spectral scales. An energy threshold selects the retained directions, and each retained direction is represented by two sigmoid gates. These class-specific gates form a shared hidden layer initialized directly from the training set. We also formulate a SVD-based subspace classifier as a non-neural geometric reference, which tests whether the estimated spectral class geometry is already discriminative before being embedded into the MLP. Experiments on MNIST, Fashion-MNIST, and a more challenging CIFAR-10 test show that the S-GAI-initialized MLP starts from a substantially more informative hidden state than Xavier initialization and reaches comparable final accuracy under full training. When the hidden layer is frozen, training only the output layer still gives stronger performance than frozen random gates, providing evidence that S-GAI effectively embeds class-wise spectral geometry into the MLP.

\keywords{Geometry-aware initialization \and Sigmoidal MLP \and SVD \and Universal approximation \and MNIST \and Fashion-MNIST \and CIFAR-10 \and Image Classification}
\end{abstract}

\section{Introduction}

The universal approximation theorem (UAT)  \cite{Cybenko1989} is often read as an expressive-power statement: with enough hidden units, a sigmoidal MLP can approximate any continuous target on a compact domain. In its classical form, the approximant is a finite sum
\begin{equation}
f_N(x)=\sum_{j=1}^{N}\alpha_j
\sig(w_j^\top x+b_j),\qquad x\in K\subset\R^d ,
\label{eq:uat-form}
\end{equation}
where the parameters are allowed to vary freely. This form is powerful, but it leaves open a practical question: if the data occupy a structured region of space, can the initial weights already encode part of that geometry?

Most standard initializations, including Xavier/Glorot initialization~\cite{Glorot2010}, are designed to stabilize signal propagation and gradient scale. They are not intended to represent class geometry before training. For image classification, however, even a simple benchmark such as MNIST has visible class-dependent structure: each digit class has a mean shape, dominant deformation directions, and residual directions that are less stable. The central idea of this work is that initialization need not be blind to such structure.

This paper connects two levels of geometry. The first is a compact constructive background for sigmoidal networks. This work builds on earlier work by Chu and Kuo~\cite{ChuKuo2025UATTropical}, which developed a planar sigmoidal construction for organizing half-spaces, polytopes, and finite covers through a boundary-first, tropical-geometry-inspired viewpoint. We use this component only as geometric motivation: it explains how prescribed regions can be translated into network weights when the geometry is known explicitly.

The main contribution of this paper is the second level: a data-driven high-dimensional instantiation for image data. Instead of assuming specified planar covers, we estimate class geometry from samples. For each class, we compute a centered SVD/PCA basis~\cite{Jolliffe2002}, select an energy-adaptive rank, and formulate a SVD-based subspace classifier. This classifier is not the neural model; it serves as a geometric reference that tests whether class-wise spectral coordinates already separate the classes.

We then compile the same spectral geometry into a one-hidden-layer sigmoidal MLP. Each retained SVD direction defines a normalized coordinate around a class mean. Compatibility with the class is expressed as a slab constraint on this coordinate, and each slab is represented by two sigmoid gates, one for each side of the interval. Thus the hidden layer is partitioned into class-specific groups, with two hidden units per retained direction, and the output layer initially aggregates the gates belonging to each class. This gives S-GAI, an explicit spectral geometry-aware initializer whose width is controlled by the retained spectral ranks.

Our contributions are:
\begin{itemize}
\item We provide a concise bridge from UAT finite sums to smooth half-space and cover-based sigmoid gates, building on the planar sigmoidal construction of~\cite{ChuKuo2025UATTropical};
\item We introduce a SVD-based subspace classifier that serves as a non-neural reference for testing whether class-wise spectral geometry is discriminative and selecting the proper parameters for S-GAI;
\item We propose S-GAI, a data-driven spectral geometry-aware initializer for MLPs, instantiated with class-wise SVD; each retained class direction contributes two hidden units, and the hidden width is determined by an energy threshold $\tau$;
\item We evaluate S-GAI through matched comparisons with Xavier initialization on MNIST, Fashion-MNIST, and CIFAR-10. Under zero-epoch and frozen-hidden protocols, S-GAI slab gates provide substantially more informative representations than matched random gates, while full training reaches comparable final accuracy under the same architecture and optimizer.
\end{itemize}

\begin{figure}[!t]
\centering
\safeincludegraphics[width=0.94\linewidth]{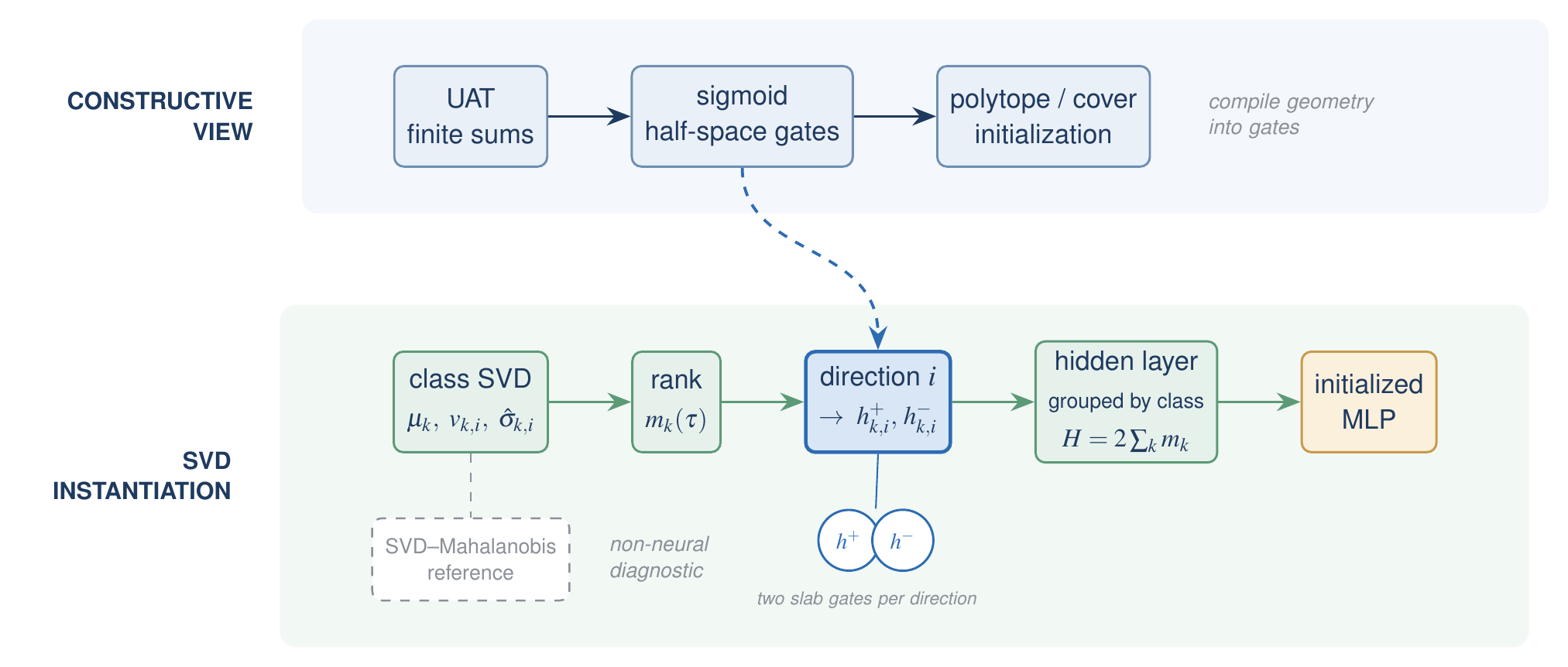}
\caption{Method flow. The top row summarizes the constructive view: finite-sum sigmoidal networks can be organized as half-space gates and then as polytope- or cover-based initializers. The bottom row shows the data-driven instantiation used for high-dimensional images. Class-wise SVD estimates the mean, directions, and scales; an energy threshold selects the retained rank $m_k(\tau)$; each retained direction is compiled into two slab gates $h^+_{k,i}$ and $h^-_{k,i}$; and the resulting gates are placed in a class-grouped hidden layer of width $H=2\sum_k m_k$. The SVD-based subspace classifier is shown only as a non-neural geometric reference for the selection of energy threshold $\tau$.}
\label{fig:method-flow}
\end{figure}

\section{From Finite Sums to Geometric Gates}
\label{sec:uat}

Let \(K\subset\R^d\) be compact and let \(\sig(t)=(1+e^{-t})^{-1}\). Cybenko's theorem states that finite sums of the form \eqref{eq:uat-form} are dense in \(C(K)\) for sigmoidal activations \cite{Cybenko1989}; related and extended results were given by Funahashi \cite{Funahashi1989}, Hornik \etal \cite{Hornik1989}, Hornik \cite{Hornik1991}, Leshno \etal \cite{Leshno1993}, and Barron \cite{Barron1993}; see also Pinkus \cite{Pinkus1999}. These theorems ensure representability, but they do not specify an initialization tied to a particular decision region.

Following the planar construction in~\cite{ChuKuo2025UATTropical}, we use the following elementary gate as the geometric primitive. For \(a\in\R^d\), \(c\in\R\), and sharpness \(\kappa>0\), define
\begin{equation}
  g_{a,c,\kappa}(x)=\sig\bigl(\kappa(c-a^\top x)\bigr).
  \label{eq:halfspace-gate}
\end{equation}
This unit is close to one inside the half-space \(a^\top x\le c\) and close to zero outside it.

\begin{lemma}[Half-space gate]
\label{lem:halfspace}
Let \(\delta>0\). If \(c-a^\top x\ge\delta\), then \(g_{a,c,\kappa}(x)\ge 1-\exp(-\kappa\delta)\). If \(a^\top x-c\ge\delta\), then \(g_{a,c,\kappa}(x)\le \exp(-\kappa\delta)\).
\end{lemma}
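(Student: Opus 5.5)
The plan is to reduce both claims to two elementary inequalities for the logistic function $\sig(t)=(1+e^{-t})^{-1}$, combined with its monotonicity. Write $t=\kappa(c-a^\top x)$, so that $g_{a,c,\kappa}(x)=\sig(t)$. Because $\kappa>0$ and $\sig$ is strictly increasing, the hypothesis $c-a^\top x\ge\delta$ gives $t\ge\kappa\delta$. Likewise, $a^\top x-c\ge\delta$ gives $t\le-\kappa\delta$. It therefore suffices to prove, for every $s\ge 0$,
\[
\sig(s)\ge 1-e^{-s},\qquad \sig(-s)\le e^{-s},
\]
and then apply monotonicity with $s=\kappa\delta$.

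For the upper bound on the negative side, compute directly. We have $\sig(-s)=1/(1+e^{s})\le 1/e^{s}=e^{-s}$, since $1+e^{s}>e^{s}>0$. This holds for every real $s$, so no sign condition is needed here.

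For the lower bound on the positive side, use the symmetry $\sig(s)=1-\sig(-s)$. This identity holds because
\[
\frac{1}{1+e^{-s}}+\frac{1}{1+e^{s}}=\frac{1}{1+e^{-s}}+\frac{e^{-s}}{e^{-s}+1}=1 .
\]
Combining it with the previous bound gives $\sig(s)\ge 1-e^{-s}$.

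Putting these together, in the first case $g_{a,c,\kappa}(x)=\sig(t)\ge\sig(\kappa\delta)\ge 1-\exp(-\kappa\delta)$. In the second case $g_{a,c,\kappa}(x)=\sig(t)\le\sig(-\kappa\delta)\le\exp(-\kappa\delta)$. The argument has no real obstacle. The only points that need care are applying monotonicity in the correct direction, which uses $\kappa>0$, and checking the symmetry identity, which is a routine algebraic check.
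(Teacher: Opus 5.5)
Your proof is correct and follows essentially the same route as the paper, which also reduces the lemma to the elementary bounds $\sigma(t)\ge 1-e^{-t}$ for $t\ge 0$ and $\sigma(t)\le e^{t}$ for $t\le 0$, applied with $t=\kappa(c-a^\top x)$. The differences are cosmetic. You use the monotonicity of $\sigma$ to evaluate the bounds at $\pm\kappa\delta$ and you verify the bounds explicitly. The paper applies the bounds at $t$ directly and leaves both the bounds and the monotonicity of $1-e^{-t}$ and $e^{t}$ implicit.
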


\begin{proof}
For \(t\ge 0\), \(\sig(t)=1/(1+e^{-t})\ge 1-e^{-t}\). For \(t\le 0\), \(\sig(t)\le e^t\). Apply these inequalities to \(t=\kappa(c-a^\top x)\).
\end{proof}

Let a polytope be written as
\begin{equation}
  P=\bigcap_{\ell=1}^{m}\{x:a_\ell^\top x\le c_\ell\}.
\end{equation}
Using one gate per supporting half-space, define
\begin{equation}
  G_P(x)=\sum_{\ell=1}^{m}
  \sig\bigl(\kappa(c_\ell-a_\ell^\top x)\bigr)
  -\left(m-\frac{1}{2}\right).
  \label{eq:polytope-score}
\end{equation}

\begin{proposition}[Robust polytope classifier]
\label{prop:polytope}
Assume \(m\exp(-\kappa\delta)<1/2\). If \(x\) satisfies all inequalities with margin at least \(\delta\), then \(G_P(x)>0\). If \(x\) violates at least one inequality with margin at least \(\delta\), then \(G_P(x)<0\).
\end{proposition}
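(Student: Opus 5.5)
The plan is to apply Lemma~\ref{lem:halfspace} to each of the $m$ gates separately and then sum the resulting bounds. Write $s_\ell(x)=\sig\bigl(\kappa(c_\ell-a_\ell^\top x)\bigr)$ and $\varepsilon=\exp(-\kappa\delta)$, so the hypothesis reads $m\varepsilon<1/2$ and $G_P(x)=\sum_{\ell} s_\ell(x)-(m-\tfrac12)$.

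\emph{Inside case.} If $c_\ell-a_\ell^\top x\ge\delta$ for every $\ell$, Lemma~\ref{lem:halfspace} gives $s_\ell(x)\ge 1-\varepsilon$ for each $\ell$. Summing these bounds yields
\begin{equation*}
G_P(x)\ge m(1-\varepsilon)-m+\tfrac12=\tfrac12-m\varepsilon>0 .
\end{equation*}

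\emph{Outside case.} Suppose some index $\ell_0$ has $a_{\ell_0}^\top x-c_{\ell_0}\ge\delta$. Then Lemma~\ref{lem:halfspace} gives $s_{\ell_0}(x)\le\varepsilon$. For the remaining indices we have no margin information, so we use only the trivial bound $s_\ell(x)<1$, which holds because the sigmoid maps into $(0,1)$. Summing,
\begin{equation*}
G_P(x)<\varepsilon+(m-1)-m+\tfrac12=\varepsilon-\tfrac12 .
\end{equation*}
The hypothesis gives $\varepsilon\le m\varepsilon<1/2$, since $m\ge1$. Hence $G_P(x)<0$.

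The only delicate point is the outside case. There the other constraints may be satisfied, violated, or near their boundaries, so the argument must rely on the crude bound $s_\ell<1$ rather than on the lemma. The offset $m-\tfrac12$ is what makes this crude bound sufficient. Note also that the outside case needs only $\varepsilon<1/2$, which is weaker than the stated hypothesis, while the inside case uses the full condition $m\varepsilon<1/2$.
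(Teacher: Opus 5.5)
Your proof is correct and follows the paper's argument essentially verbatim: apply Lemma~\ref{lem:halfspace} gate by gate and sum to get $\tfrac12-m\varepsilon>0$ inside; in the outside case, bound the violated gate by $\varepsilon$ and the remaining gates by one to get $\varepsilon-\tfrac12<0$. Using the strict bound $s_\ell<1$ instead of the paper's $\le 1$, and noting that the outside case only needs $\varepsilon<1/2$, are harmless refinements.
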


\begin{proof}
Inside \(P\) with margin \(\delta\), Lemma~\ref{lem:halfspace} gives
\[
G_P(x)\ge m(1-e^{-\kappa\delta})-\left(m-\frac12\right)>0.
\]
If one constraint is violated by margin \(\delta\), then the corresponding gate is at most \(e^{-\kappa\delta}\), while all other gates are at most one. Thus
\[
G_P(x)\le (m-1)+e^{-\kappa\delta}-\left(m-\frac12\right)<0.
\]
\end{proof}

Finite unions are obtained by a second sigmoidal aggregation. For \(C=\bigcup_{r=1}^{R}P_r\), define
\begin{equation}
  F_C(x)=\sum_{r=1}^{R}\sig\bigl(\lambda G_{P_r}(x)\bigr)-\frac12 .
  \label{eq:union-score}
\end{equation}
If \(x\) lies robustly inside one component, one term is close to one; if \(x\) lies robustly outside all components, all terms are close to zero for sufficiently large \(\lambda\). This gives a sigmoidal network in the same finite-sum spirit as UAT, but with weights chosen from an explicit geometric description.

The construction is inspired by the way tropical geometry represents polyhedral structure through max-plus affine pieces \cite{MaclaganSturmfels2015}. ReLU networks inherit closely related piecewise-linear subdivisions and have been studied through tropical rational maps \cite{ZhangNaitzatLim2018}. We do not use ReLU networks; the point is instead to keep a smooth sigmoid activation while borrowing a boundary-first design principle: identify geometric boundary primitives first, then compile them into weights.

For a general compact target region in the plane, one may approximate the region by a finite cover and polygonal components, then apply \eqref{eq:polytope-score} and \eqref{eq:union-score}. The planar examples and visual demonstrations are reported in prior work~\cite{ChuKuo2025UATTropical}. In this paper, the role of this section is to motivate the high-dimensional replacement: when the geometry is not hand-specified, estimate it from data.

\section{Spectral Class Geometry and SVD-Referenced Subspace Classifier}
\label{sec:svd-classifier}

For image data, class regions are not given as planar covers. We therefore estimate class geometry from the training set. For class \(k\), let \(\mu_k\in\R^d\) be the class mean and let \(X_k\in\R^{n_k\times d}\) contain centered samples \(x_i-\mu_k\). Compute
\begin{equation}
  X_k=U_kS_kV_k^\top,\qquad
  V_k=[v_{k,1},\ldots,v_{k,d}],
\end{equation}
where \(S_k=\diag(s_{k,1},\ldots)\). The empirical standard deviation along direction \(v_{k,i}\) is
\begin{equation}
  \widehat{\sigma}_{k,i}=s_{k,i}/\sqrt{n_k-1}.
\end{equation}
For an energy threshold \(\tau\in(0,1)\), choose
\begin{equation}
  m_k(\tau)=\min\left\{m:
  \frac{\sum_{i=1}^{m}s_{k,i}^2}{\sum_i s_{k,i}^2}\ge\tau\right\}.
  \label{eq:energy-rank}
\end{equation}

With \(m_k=m_k(\tau)\), the SVD-Mahalanobis reference score is
\begin{equation}
\begin{split}
  d_k(x) &=
  \sum_{i=1}^{m_k}
  \left(
  \frac{v_{k,i}^\top(x-\mu_k)}{\widehat{\sigma}_{k,i}}
  \right)^2 \\
  &\quad+
  \lambda
  \frac{
  \left\|
  (I-V_{k,m_k}V_{k,m_k}^\top)(x-\mu_k)
  \right\|_2^2
  }{\widehat{\sigma}_{k,m_k}^2},
\end{split}
\label{eq:svd-mahalanobis}
\end{equation}
where \(V_{k,m_k}\) contains the retained directions and \(\lambda\ge 0\) penalizes residual energy outside the retained subspace. 

This score defines a subspace classifier based on the estimated SVD geometry of each class. The first term is a truncated Mahalanobis distance inside the retained class subspace: deviations along high-variance class directions are normalized by their empirical spectral scales. The second term measures the normalized residual energy outside the retained subspace. This complement penalty is important because a sample may have moderate coordinates along the retained directions of a class while still being poorly explained by that class subspace as a whole. The residual term therefore acts as a rejection mechanism for samples that lie far from the retained spectral model. Empirically, removing this term by setting $\lambda=0$ leads to a large accuracy drop of more than 13 percentage points, as shown in Fig.~\ref{fig:svd-reference}.

Classification is by
\begin{equation}
  \widehat y(x)=\argminop_k d_k(x).
\end{equation}

The role of this classifier is twofold in this work. First, it provides a non-neural reference for testing whether class-wise spectral geometry is already discriminative. Second, it identifies the retained directions, ranks, and spectral scales that will later be compiled into sigmoid slab gates. In this sense, the classifier is not a competing neural architecture, but a geometric intermediate between the estimated class subspaces and the initialized MLP.

\begin{figure}[!t]
\centering
\safeincludegraphics[width=0.66\linewidth]{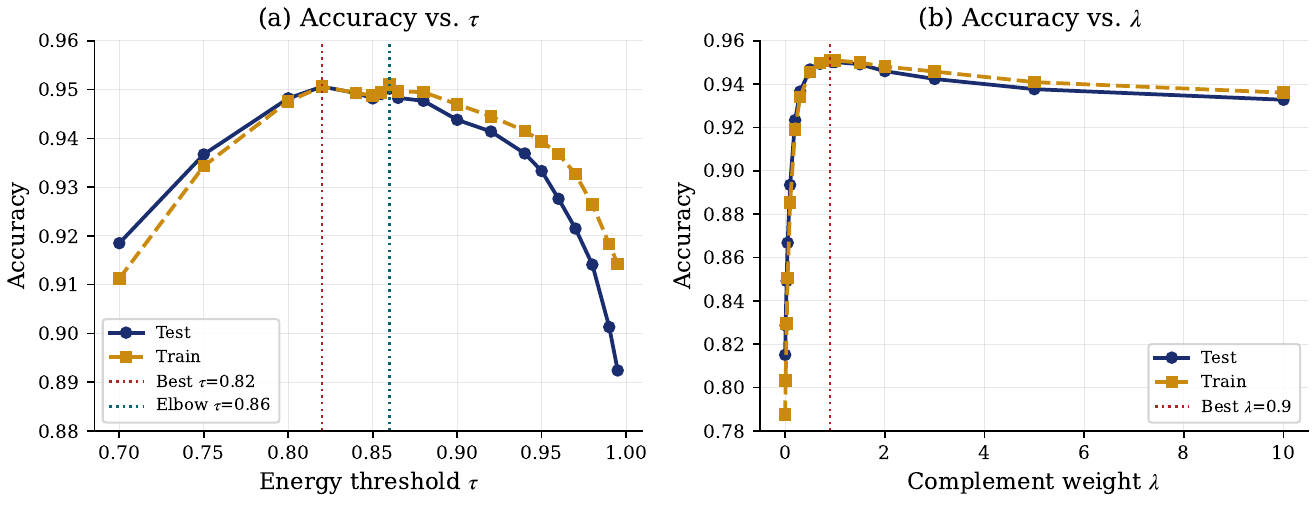}
\safeincludegraphics[width=0.32\linewidth]{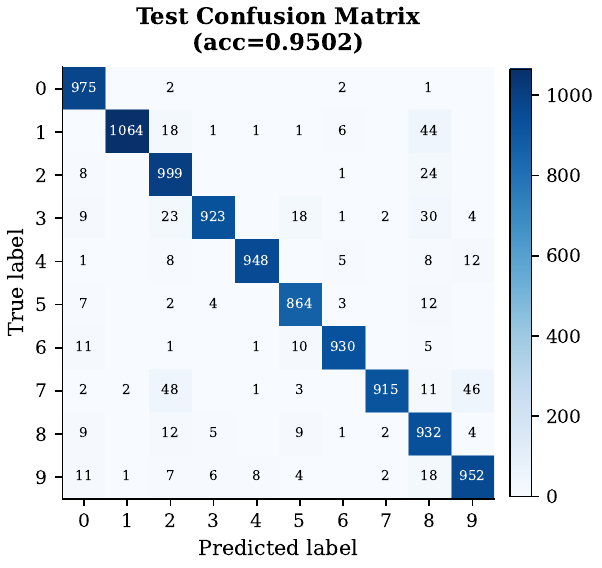}
\caption{SVD-based Subspace Classifier reference for MNIST. The energy threshold selects class-dependent ranks, and the complement penalty rejects samples that are not well explained by a class subspace. This reference is non-neural; it is used to validate the spectral geometry before compiling it into an MLP.}
\label{fig:svd-reference}
\end{figure}

\section{Compiling Spectral Slabs into a Sigmoid MLP}
\label{sec:svd-init}

We now compile the class-wise SVD geometry into a one-hidden-layer sigmoid MLP. We refer to the resulting procedure as S-GAI, short for Spectral Geometry-Aware Initialization. For class \(k\) and retained direction \(i\), define the normalized spectral coordinate
\begin{equation}
  z_{k,i}(x)=
  \frac{v_{k,i}^\top(x-\mu_k)}
       {\widehat{\sigma}_{k,i}}.
\end{equation}
A sample compatible with class \(k\) should have moderate values of \(z_{k,i}(x)\) along the retained directions. We model the interval \([-\rho,\rho]\) by two sigmoid slab gates:
\begin{equation}
  h_{k,i}^{+}(x)=\sig\bigl(\beta(\rho-z_{k,i}(x))\bigr),\qquad
  h_{k,i}^{-}(x)=\sig\bigl(\beta(\rho+z_{k,i}(x))\bigr),
  \label{eq:slab-gates}
\end{equation}
where \(\rho>0\) is the slab half-width and \(\beta>0\) controls sharpness. Both gates are near one when \(z_{k,i}(x)\in[-\rho,\rho]\).

Equation~\eqref{eq:slab-gates} gives explicit hidden-layer weights:
\begin{equation}
\begin{array}{ll}
  w_{k,i}^{+}= -\beta v_{k,i}/\widehat{\sigma}_{k,i},
  & b_{k,i}^{+}= \beta\rho+
    \beta v_{k,i}^\top\mu_k/\widehat{\sigma}_{k,i},\\[2mm]
  w_{k,i}^{-}= \phantom{-}\beta v_{k,i}/\widehat{\sigma}_{k,i},
  & b_{k,i}^{-}= \beta\rho-
    \beta v_{k,i}^\top\mu_k/\widehat{\sigma}_{k,i}.
\end{array}
\label{eq:explicit-svd-weights}
\end{equation}
Thus each retained SVD direction contributes two hidden units: one gate for the upper half-space constraint \(z_{k,i}\le \rho\) and one gate for the lower half-space constraint \(z_{k,i}\ge -\rho\). The hidden layer is partitioned into class-specific groups,
\[
\mathcal H= \mathcal H_0\,|\,\mathcal H_1\,|\,\cdots\,|\,\mathcal H_9, \qquad |\mathcal H_k|=2m_k(\tau),
\]
and its total width is
\begin{equation}
  H(\tau)=2\sum_{k=0}^{9}m_k(\tau).
  \label{eq:hidden-width}
\end{equation}
The initialized class logit is the normalized average of the class slab responses:
\begin{equation}
  \ell_k^{(0)}(x)
  =
  \frac{\gamma}{2m_k}
  \sum_{i=1}^{m_k}
  \bigl(h_{k,i}^{+}(x)+h_{k,i}^{-}(x)\bigr),
  \label{eq:initial-logit}
\end{equation}
with scale \(\gamma>0\). 
\subsection{Experiment Protocols}
After initialization, we study two protocols: \emph{full training}, in which all weights are updated, and \emph{frozen-hidden training}, in which the S-GAI slab gates (the hidden layer) are fixed and only the output layer is trained.

For each dataset and each threshold \(\tau\), the Xavier baseline uses exactly the same architecture and hidden width \(H(\tau)\). The only difference is the initial value of the weights. In the \emph{full training} protocol, both hidden and output weights are trainable. In the \emph{frozen-hidden training} protocol, the randomly initialized hidden layer is fixed and only the output layer is trained. This isolates the effect of S-GAI's data-driven spectral geometry from parameter count.

For a one-hidden-layer sigmoid MLP with input dimension \(d\), hidden width \(H\), and \(10\) outputs, the total parameter count is
\begin{equation}
  (dH+H)+(10H+10)=(d+11)H+10.
\end{equation}
For MNIST and Fashion-MNIST, \(d=784\); for CIFAR-10, \(d=3072\). Under frozen-hidden training, the trainable parameter count is only
\begin{equation}
  10H+10.
\end{equation}

\section{Experiments}
\label{sec:exp}

\subsection{Datasets and Experimental Setup}
\label{sec:datasets-protocol}

We evaluate S-GAI on three image-classification datasets. The main controlled study uses MNIST handwritten digits~\cite{LeCun1998,LeCunMNIST}, flattened to \(x\in\R^{784}\). To test whether the effect is specific to digit shapes, we also evaluate the same protocol on Fashion-MNIST~\cite{Xiao2017FashionMNIST}, which has the same image size and number of classes as MNIST but contains clothing categories. Finally, we include CIFAR-10~\cite{Krizhevsky2009CIFAR} as a more challenging natural-image stress test. For CIFAR-10, we deliberately apply the method directly to flattened raw RGB pixels, \(x\in\R^{3072}\), without convolutional inductive bias, data augmentation, or pretrained features.

All neural experiments use one-hidden-layer sigmoidal MLPs trained for 20 epochs with Adam~\cite{KingmaBa2015}. We repeat all experiments over multiple random seeds and test the energy thresholds \(\tau\in\{0.75,0.86,0.90,0.95\}\). For each dataset and each threshold, S-GAI and Xavier models use the same hidden width \(H(\tau)\), optimizer, number of epochs, and train/freeze protocol. The highlight threshold is \(\tau=0.95\) for MNIST and Fashion-MNIST. For CIFAR-10, we highlight \(\tau=0.90\), since this setting gives the strongest frozen-hidden representation in our experiments.

Tables~\ref{tab:all-energy-rank}--\ref{tab:all-threshold-protocols} summarize the experimental evidence. Table~\ref{tab:all-energy-rank} reports the retained ranks and induced hidden widths. Table~\ref{tab:all-geometry-reference} compares the non-neural SVD-Mahalanobis reference with the zero-epoch initialized MLP. Tables~\ref{tab:all-focus-matched-protocols} and~\ref{tab:all-threshold-protocols} report the matched train/freeze comparisons at the highlight threshold and across thresholds, respectively.

\begin{table}[t]
\centering
\scriptsize
\setlength{\tabcolsep}{3.0pt}
\begin{tabular}{llrrrrr}
\toprule
Dataset & $\tau$ & Mean rank & Rank range & $H$ & Mean retained energy & Mean cutoff std. \\
\midrule
\multirow{4}{*}{MNIST}
& 0.75 & 23.9 & 9--32 & 478 & 0.753 & 0.5643 \\
& 0.86 & 48.4 & 23--60 & 968 & 0.861 & 0.3348 \\
& 0.90 & 68.2 & 36--82 & 1364 & 0.901 & 0.2534 \\
& 0.95 & 121.4 & 71--142 & 2428 & 0.950 & 0.1558 \\
\cmidrule(lr){1-7}
\multirow{4}{*}{Fashion-MNIST}
& 0.75 & 18.6 & 12--41 & 372 & 0.754 & 0.5446 \\
& 0.86 & 50.2 & 24--96 & 1004 & 0.861 & 0.2890 \\
& 0.90 & 78.2 & 36--135 & 1564 & 0.901 & 0.2121 \\
& 0.95 & 155.5 & 73--222 & 3110 & 0.950 & 0.1281 \\
\cmidrule(lr){1-7}
\multirow{4}{*}{CIFAR-10}
& 0.75 & 22.3 & 15--34 & 446 & 0.752 & 1.0229 \\
& 0.86 & 61.6 & 44--86 & 1232 & 0.860 & 0.5210 \\
& 0.90 & 98.6 & 74--131 & 1972 & 0.900 & 0.3812 \\
& 0.95 & 206.8 & 164--267 & 4136 & 0.950 & 0.2196 \\
\bottomrule
\end{tabular}
\caption{Energy thresholds determine the class-dependent SVD ranks used by both the SVD-based subspace classifier and the S-GAI initializer. The hidden width is $H=2\sum_k m_k$ because each retained direction gives two sigmoid slab gates. MNIST and Fashion-MNIST are flattened to $\mathbb{R}^{784}$, while CIFAR-10 is flattened to $\mathbb{R}^{3072}$.}
\label{tab:all-energy-rank}
\end{table}

\begin{table}[t]
\centering
\scriptsize
\setlength{\tabcolsep}{4.0pt}
\begin{tabular}{llrrr}
\toprule
Dataset & $\tau$ & $H$ & SVD-Mahalanobis(\%) & S-GAI init(\%) \\
\midrule
\multirow{4}{*}{MNIST}
& 0.75 & 478 & 93.67 & 45.53 \\
& 0.86 & 968 & 95.02 & 81.35 \\
& 0.90 & 1364 & 94.38 & 86.17 \\
& 0.95 & 2428 & 93.33 & 87.41 \\
\cmidrule(lr){1-5}
\multirow{4}{*}{Fashion-MNIST}
& 0.75 & 372 & 67.42 & 56.34 \\
& 0.86 & 1004 & 74.43 & 62.54 \\
& 0.90 & 1564 & 78.41 & 66.91 \\
& 0.95 & 3110 & 83.03 & 71.32 \\
\cmidrule(lr){1-5}
\multirow{4}{*}{CIFAR-10}
& 0.75 & 446 & 23.20 & 20.49 \\
& 0.86 & 1232 & 24.14 & 20.39 \\
& 0.90 & 1972 & 27.19 & 20.32 \\
& 0.95 & 4136 & 32.46 & 18.51 \\
\bottomrule
\end{tabular}
\caption{Geometry-only performance before gradient updates. The SVD-Mahalanobis column reports the non-neural SVD-based subspace reference classifier using Eq.~\eqref{eq:svd-mahalanobis}. S-GAI init reports the zero-epoch test accuracy of the S-GAI-initialized MLP.}
\label{tab:all-geometry-reference}
\end{table}

\begin{table}[t]
\centering
\scriptsize
\setlength{\tabcolsep}{2.8pt}
\begin{tabular}{llrrrr}
\toprule
Dataset & Model & $H$ & Trainable params & Init acc.(\%) & Final acc.(\%) \\
\midrule
\multirow{4}{*}{MNIST}
& S-GAI (full) & 2428 & 1,930,270 & 87.41 & 98.02 $\pm$ 0.05 \\
& Xavier (full) & 2428 & 1,930,270 & 10.42 $\pm$ 0.78 & 97.90 $\pm$ 0.08 \\
& S-GAI (frozen) & 2428 & 24,290 & 87.41 & 96.73 $\pm$ 0.13 \\
& Xavier (frozen) & 2428 & 24,290 & 10.42 $\pm$ 0.78 & 89.31 $\pm$ 0.56 \\
\cmidrule(lr){1-6}
\multirow{4}{*}{Fashion-MNIST}
& S-GAI (full) & 3110 & 2,472,460 & 71.32 & 89.00 $\pm$ 0.24 \\
& Xavier (full) & 3110 & 2,472,460 & 10.01 $\pm$ 0.01 & 88.37 $\pm$ 0.22 \\
& S-GAI (frozen) & 3110 & 31,110 & 71.32 & 86.36 $\pm$ 0.56 \\
& Xavier (frozen) & 3110 & 31,110 & 10.01 $\pm$ 0.01 & 80.52 $\pm$ 0.31 \\
\cmidrule(lr){1-6}
\multirow{4}{*}{CIFAR-10}
& S-GAI (full) & 1972 & 6,079,686 & 20.32 & 52.11 $\pm$ 0.82 \\
& Xavier (full) & 1972 & 6,079,686 & 9.97 $\pm$ 0.04 & 52.16 $\pm$ 0.08 \\
& S-GAI (frozen) & 1972 & 19,730 & 20.32 & 44.36 $\pm$ 0.59 \\
& Xavier (frozen) & 1972 & 19,730 & 9.97 $\pm$ 0.04 & 36.53 $\pm$ 0.08 \\
\bottomrule
\end{tabular}
\caption{Matched MLP comparison at the highlight threshold $\tau=0.95$ for MNIST and Fashion-MNIST, and $\tau=0.90$ for CIFAR-10. Within each dataset, S-GAI and Xavier models use the same architecture and hidden width; they differ only in initialization and whether the hidden layer is trainable. The frozen rows test the quality of the initialized representation itself.}
\label{tab:all-focus-matched-protocols}
\end{table}

\begin{table}[t]
\centering
\scriptsize
\setlength{\tabcolsep}{2.2pt}
\begin{tabular}{llrrrrr}
\toprule
Dataset & $\tau$ & $H$ & S-GAI full(\%) & Xavier full(\%) & S-GAI frozen(\%) & Xavier frozen(\%) \\
\midrule
\multirow{4}{*}{MNIST}
& 0.75 & 478 & 97.45 $\pm$ 0.03 & 97.91 $\pm$ 0.04 & 91.48 $\pm$ 0.15 & 88.44 $\pm$ 0.13 \\
& 0.86 & 968 & 97.73 $\pm$ 0.03 & 98.00 $\pm$ 0.08 & 95.11 $\pm$ 0.22 & 89.34 $\pm$ 0.20 \\
& 0.90 & 1364 & 97.84 $\pm$ 0.11 & 97.95 $\pm$ 0.09 & 96.05 $\pm$ 0.24 & 89.46 $\pm$ 0.30 \\
& 0.95 & 2428 & 98.02 $\pm$ 0.05 & 97.90 $\pm$ 0.08 & 96.73 $\pm$ 0.13 & 89.31 $\pm$ 0.56 \\
\cmidrule(lr){1-7}
\multirow{4}{*}{Fashion-MNIST}
& 0.75 & 372 & 87.89 $\pm$ 0.22 & 88.21 $\pm$ 0.11 & 78.91 $\pm$ 0.28 & 78.65 $\pm$ 0.12 \\
& 0.86 & 1004 & 88.73 $\pm$ 0.15 & 88.55 $\pm$ 0.12 & 83.94 $\pm$ 0.27 & 79.88 $\pm$ 0.39 \\
& 0.90 & 1564 & 88.81 $\pm$ 0.50 & 88.03 $\pm$ 0.40 & 85.14 $\pm$ 0.87 & 79.93 $\pm$ 0.41 \\
& 0.95 & 3110 & 89.00 $\pm$ 0.24 & 88.37 $\pm$ 0.22 & 86.36 $\pm$ 0.56 & 80.52 $\pm$ 0.31 \\
\cmidrule(lr){1-7}
\multirow{4}{*}{CIFAR-10}
& 0.75 & 446 & 51.17 $\pm$ 0.46 & 50.31 $\pm$ 0.33 & 39.93 $\pm$ 0.04 & 35.18 $\pm$ 0.31 \\
& 0.86 & 1232 & 52.83 $\pm$ 0.39 & 51.50 $\pm$ 0.65 & 40.94 $\pm$ 1.56 & 37.22 $\pm$ 0.56 \\
& 0.90 & 1972 & 52.11 $\pm$ 0.82 & 52.16 $\pm$ 0.08 & 44.36 $\pm$ 0.59 & 36.53 $\pm$ 0.08 \\
& 0.95 & 4136 & 52.98 $\pm$ 1.45 & 51.65 $\pm$ 0.25 & 38.84 $\pm$ 2.94 & 36.47 $\pm$ 0.69 \\
\bottomrule
\end{tabular}
\caption{Effect of the energy threshold on matched-protocol accuracy across datasets. Fully trainable models reach similar final accuracy, while the frozen-hidden comparison consistently favors S-GAI slab gates over frozen random gates. CIFAR-10 is evaluated directly on flattened raw pixels and therefore serves as a natural-image stress test.}
\label{tab:all-threshold-protocols}
\end{table}

\subsection{Results}
\label{sec:results}

The key comparison is not S-GAI against a smaller or larger network; it is S-GAI against a matched Xavier network with the same architecture. Across datasets, S-GAI starts from a substantially more informative state than Xavier initialization. At \(\tau=0.95\), the zero-epoch S-GAI-initialized model reaches \(87.41\%\) test accuracy on MNIST and \(71.32\%\) on Fashion-MNIST, while the matched Xavier models remain near chance level. On CIFAR-10, where we highlight \(\tau=0.90\) for the raw-pixel stress test, the S-GAI-initialized model reaches \(20.32\%\) before training, compared with \(9.97\%\) for Xavier.

Under full training, S-GAI-initialized and Xavier-initialized models reach similar final accuracy. This suggests that S-GAI is not simply increasing capacity; rather, it changes the starting representation while keeping the architecture fixed. The frozen-hidden setting is more diagnostic: when the hidden layer cannot be trained, strong performance indicates that the initialized S-GAI slab gates already form useful features. At \(\tau=0.95\), frozen S-GAI slab gates reach \(96.73\%\) on MNIST and \(86.36\%\) on Fashion-MNIST, compared with \(89.31\%\) and \(80.52\%\) for frozen random gates. On CIFAR-10, the strongest frozen-hidden advantage appears at \(\tau=0.90\), where S-GAI (frozen) reaches \(44.36\%\), compared with \(36.53\%\) for frozen random gates. This is consistent with the interpretation in~\cite{ChuKuo2025UATTropical} that geometry-informed initialization can shift part of training from discovering decision boundaries from scratch toward calibrating and refining an already structured representation.

\begin{figure}[!tbp]
\centering
\safeincludegraphics[width=0.3\linewidth]{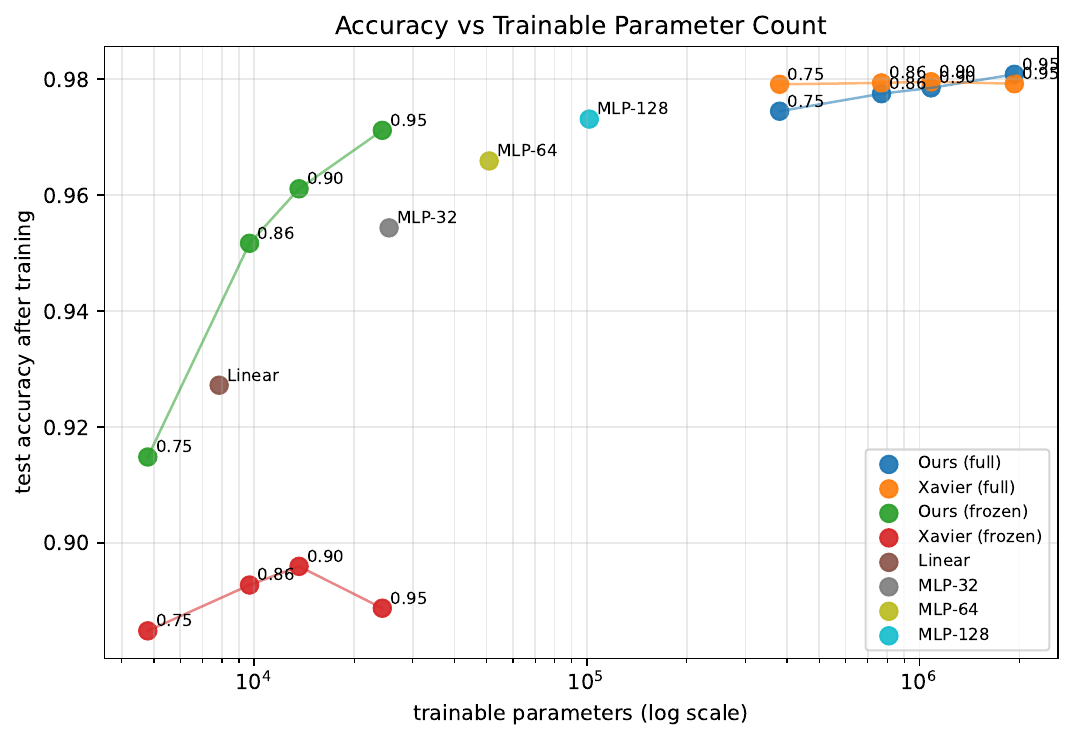}
\safeincludegraphics[width=0.3\linewidth]{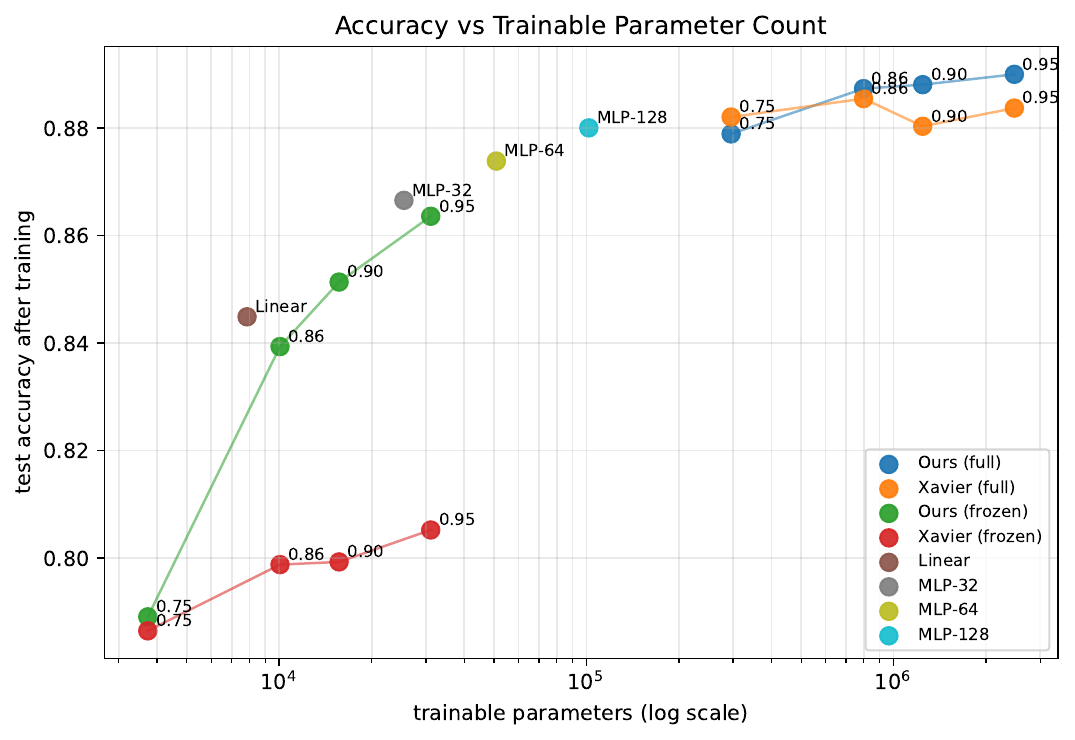}
\safeincludegraphics[width=0.3\linewidth]{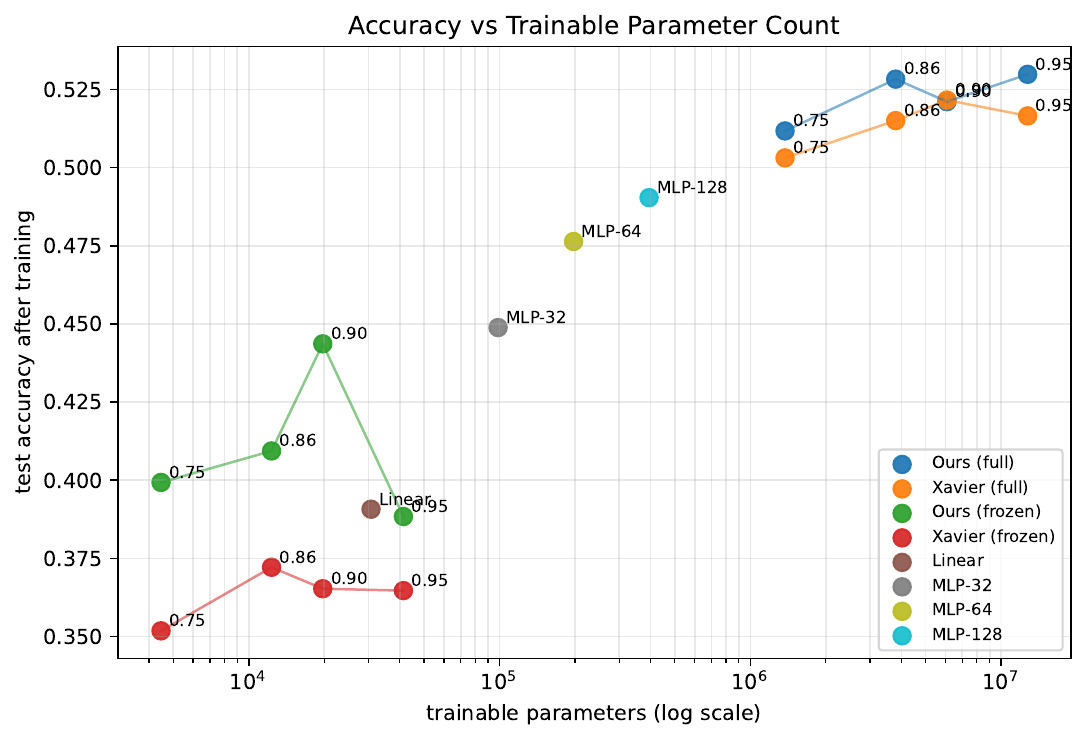}
\caption{Accuracy versus trainable parameter count on MNIST, Fashion-MNIST, and CIFAR-10, shown from left to right. S-GAI and Xavier use matched hidden width at each energy threshold. Fully trainable models reach similar final accuracy, while the frozen-hidden comparison evaluates the quality of the fixed hidden representation.}
\label{fig:param-count}
\end{figure}

\begin{figure}[!tbp]
\centering
\safeincludegraphics[width=0.3\linewidth]{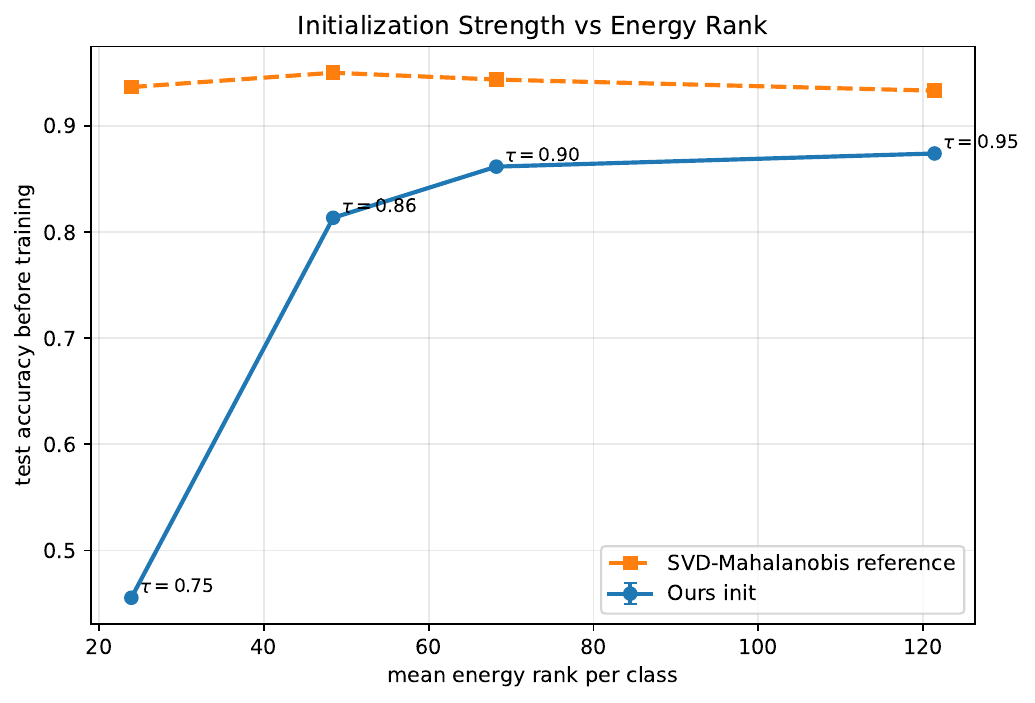}
\safeincludegraphics[width=0.3\linewidth]{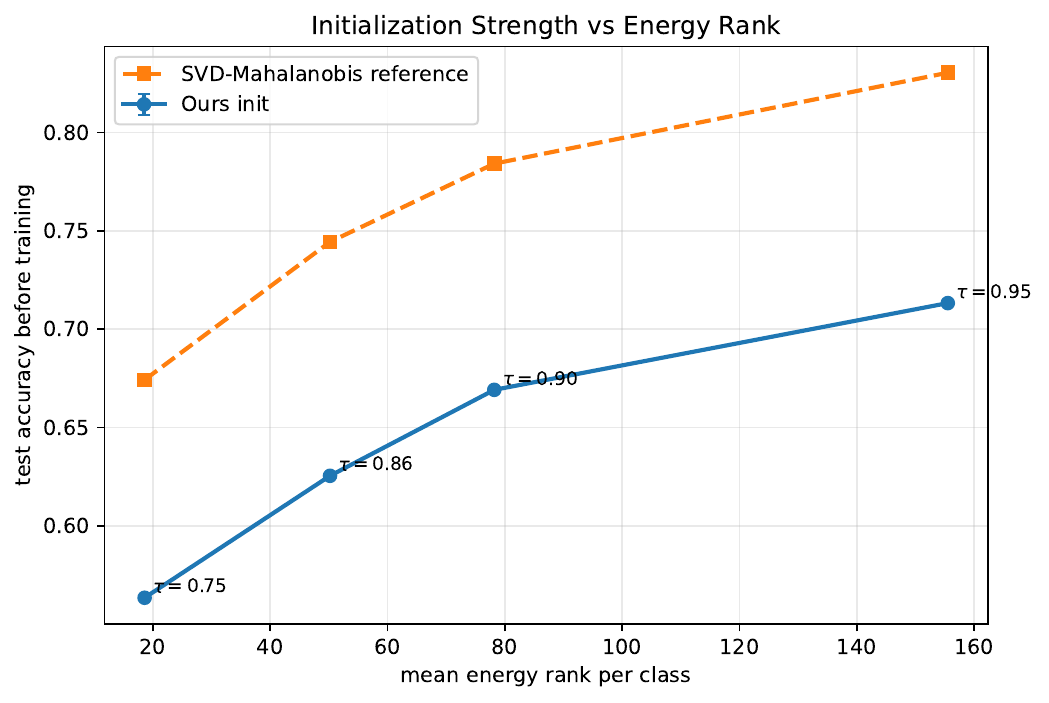}
\safeincludegraphics[width=0.3\linewidth]{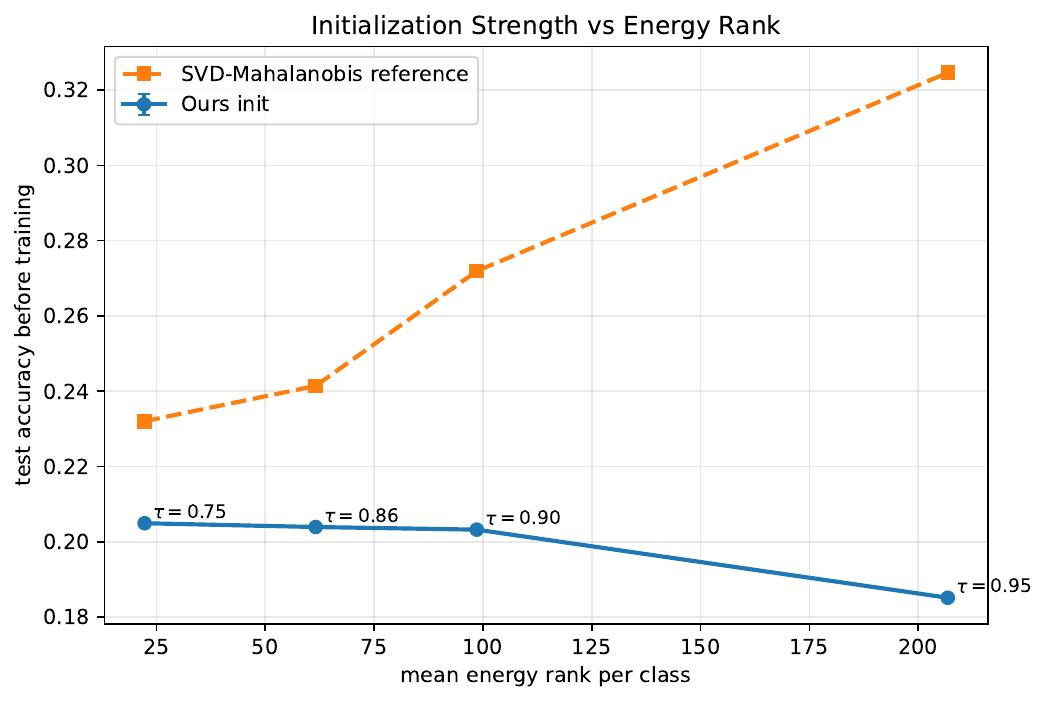}
\safeincludegraphics[width=0.3\linewidth]{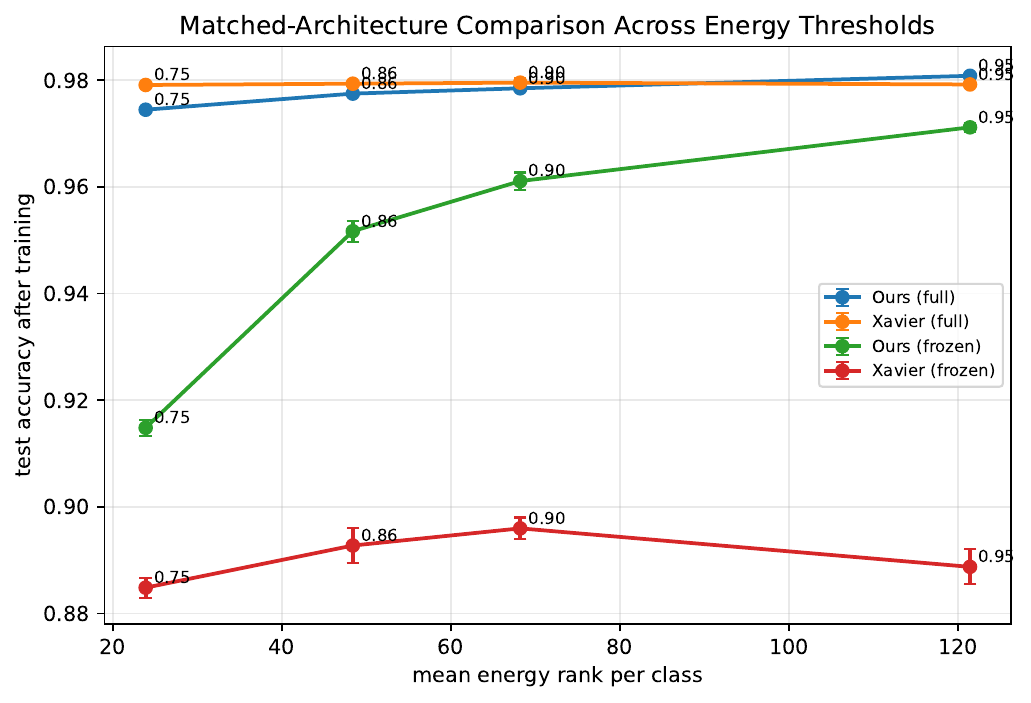}
\safeincludegraphics[width=0.3\linewidth]{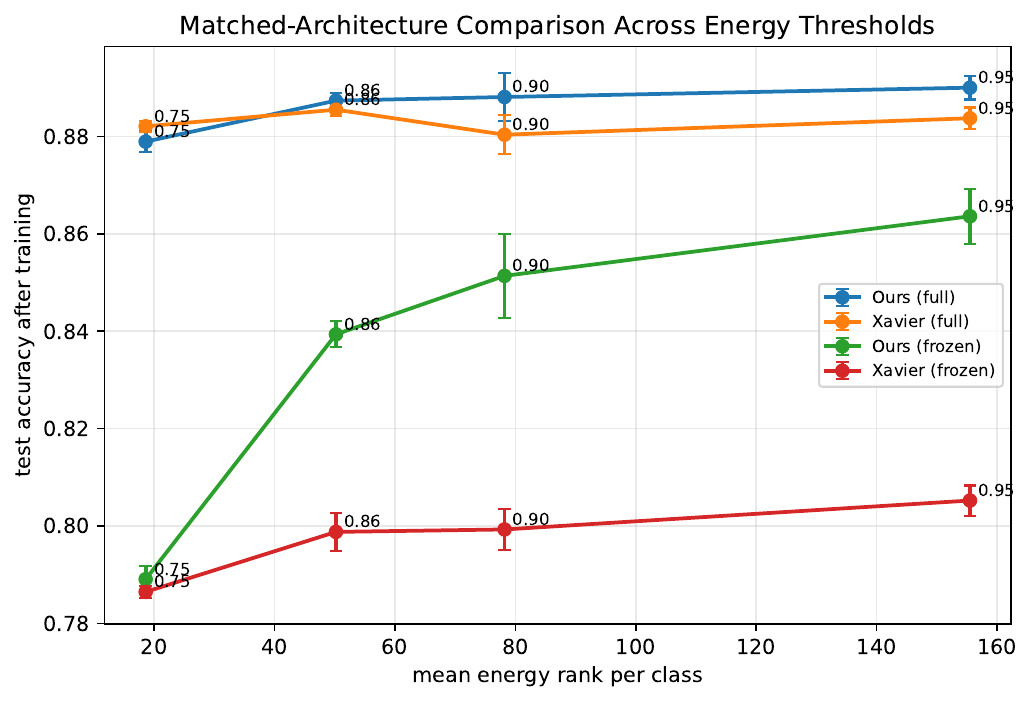}
\safeincludegraphics[width=0.3\linewidth]{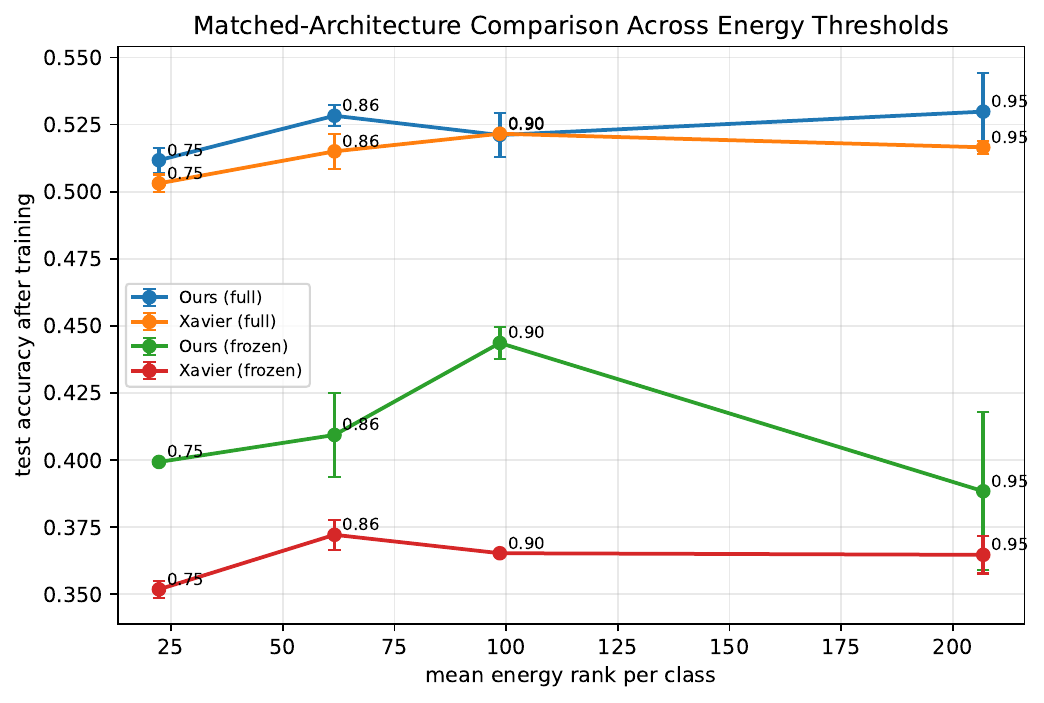}
\caption{Energy-rank analysis on MNIST, Fashion-MNIST, and CIFAR-10, shown from left to right. Top: before training, S-GAI is already more informative than random initialization. Bottom: after training, fully trainable models are close, while the frozen-hidden setting reveals the representation value of the S-GAI slab gates.}
\label{fig:energy-analysis}
\end{figure}

\begin{figure}[!tbp]
\centering
\safeincludegraphics[width=0.3\linewidth]{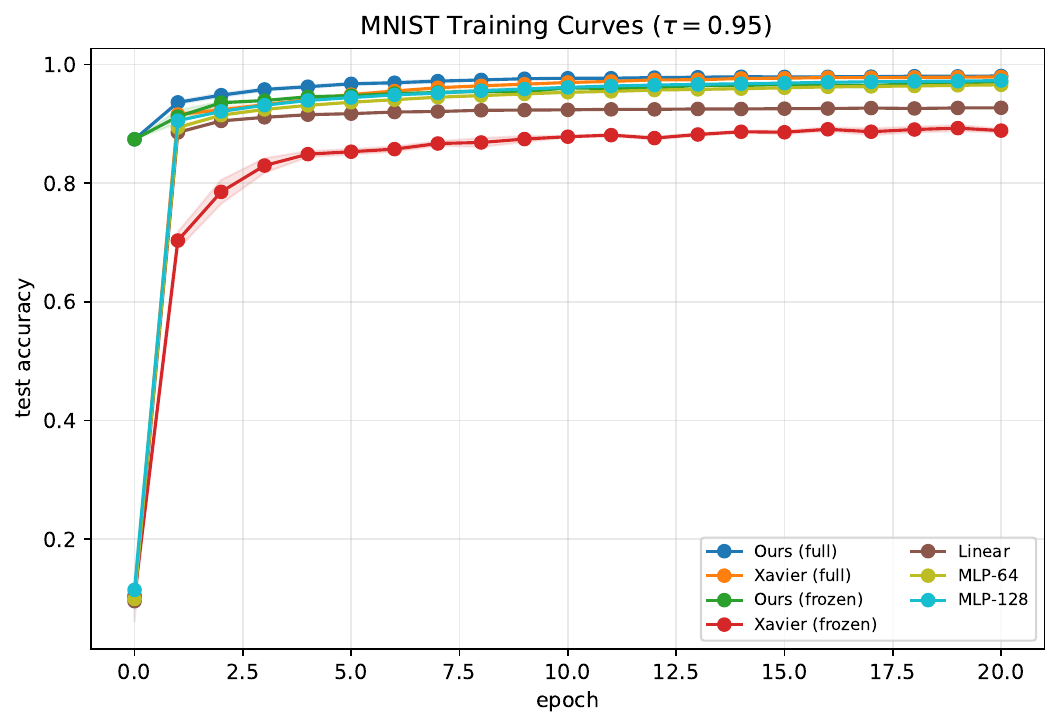}
\safeincludegraphics[width=0.3\linewidth]{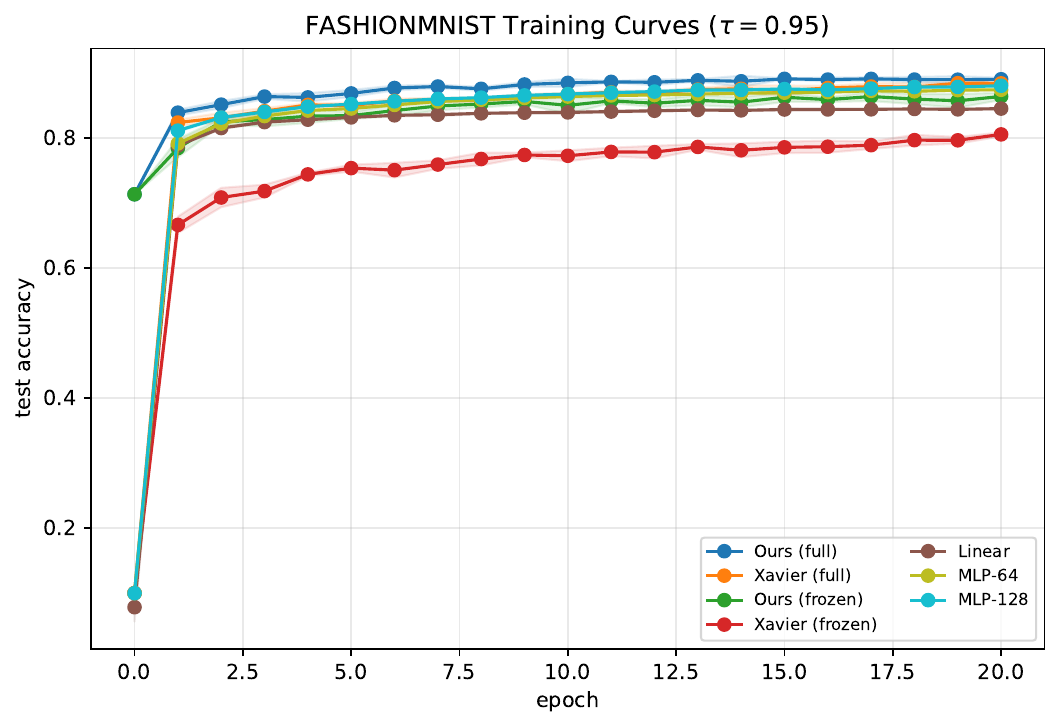}
\safeincludegraphics[width=0.3\linewidth]{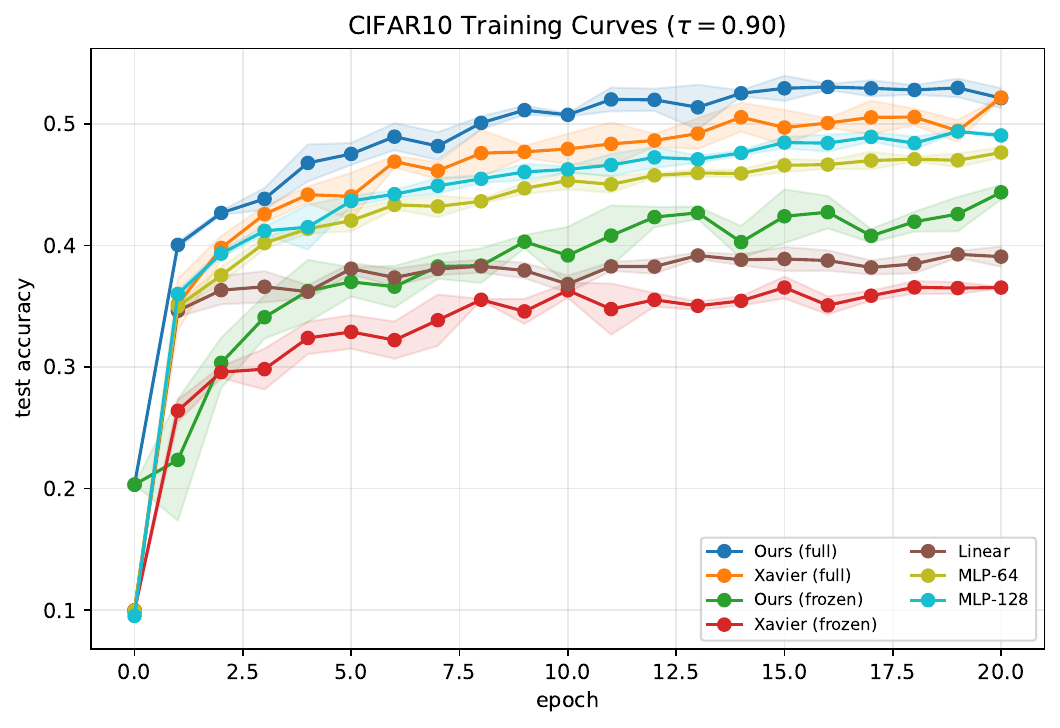}
\caption{Training curves on MNIST, Fashion-MNIST, and CIFAR-10, shown from left to right. MNIST and Fashion-MNIST use \(\tau=0.95\), while CIFAR-10 uses \(\tau=0.90\) as a raw-pixel stress test. Across datasets, S-GAI starts from a more informative state than the matched Xavier initialization. Under full training, the models reach similar final accuracy, while the frozen-hidden protocol highlights the representation quality of S-GAI.}
\label{fig:training-curves}
\end{figure}

Figs.~\ref{fig:param-count}--\ref{fig:training-curves} summarize the same behavior from complementary views. The parameter-count and energy-rank plots show that fully trainable models are comparable after optimization, while the frozen-hidden comparison consistently favors S-GAI. Fig.~\ref{fig:diagnostics} provides a direct diagnostic view through confusion matrices and error galleries. The remaining errors are mostly plausible class confusions, and the before-training confusion matrices show that S-GAI already induces structured predictions before gradient updates on MNIST and Fashion-MNIST. On the more challenging CIFAR-10 benchmark, this structure is weaker, but the initialized model remains above chance and avoids the degenerate single-class prediction pattern observed under Xavier initialization. After full training, both initializations reach comparable accuracy as in the other cases.

\begin{figure}[!p]
\centering
\safeincludegraphics[width=0.74\linewidth]{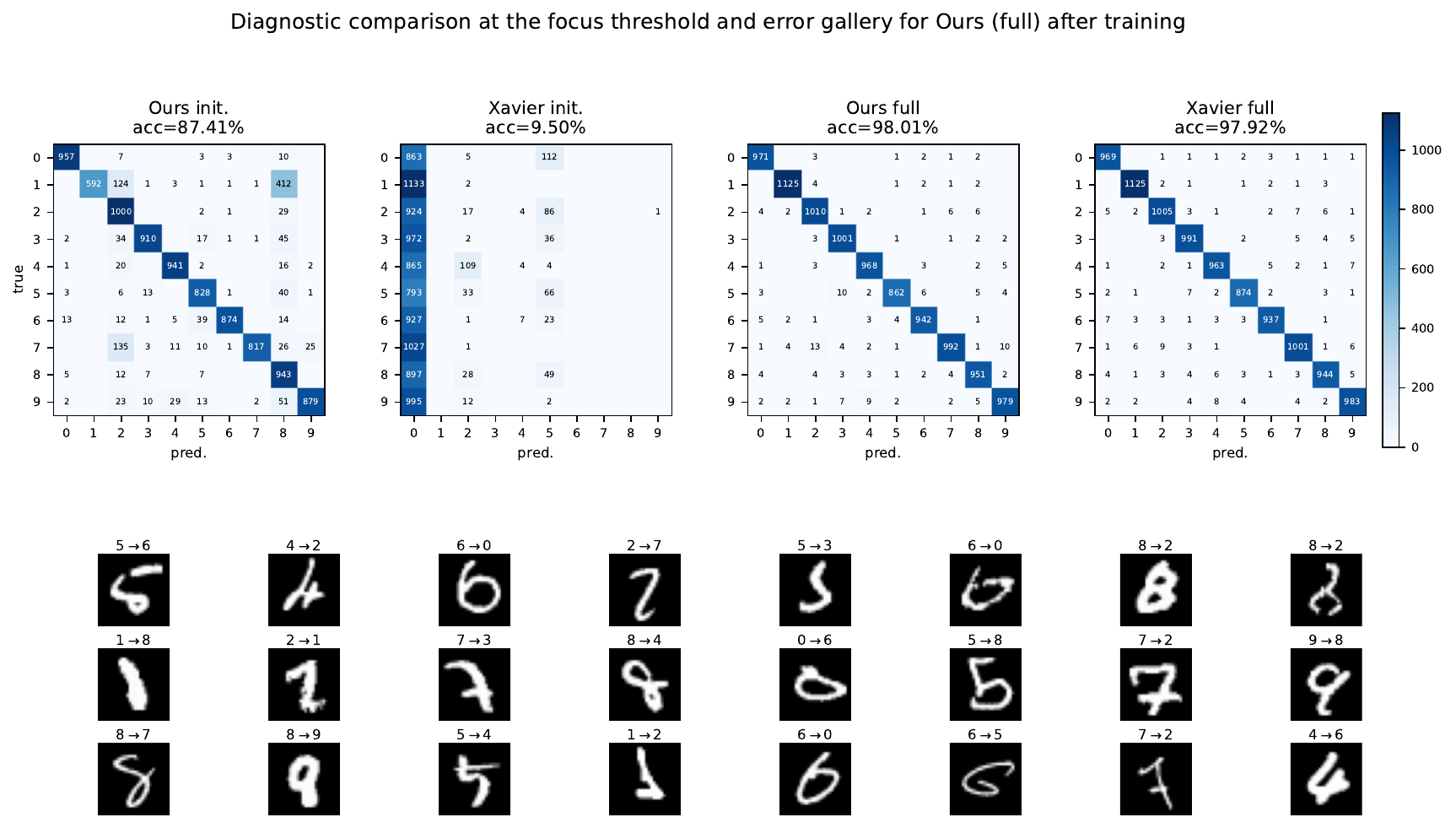}\par\vspace{-2mm}
\safeincludegraphics[width=0.74\linewidth]{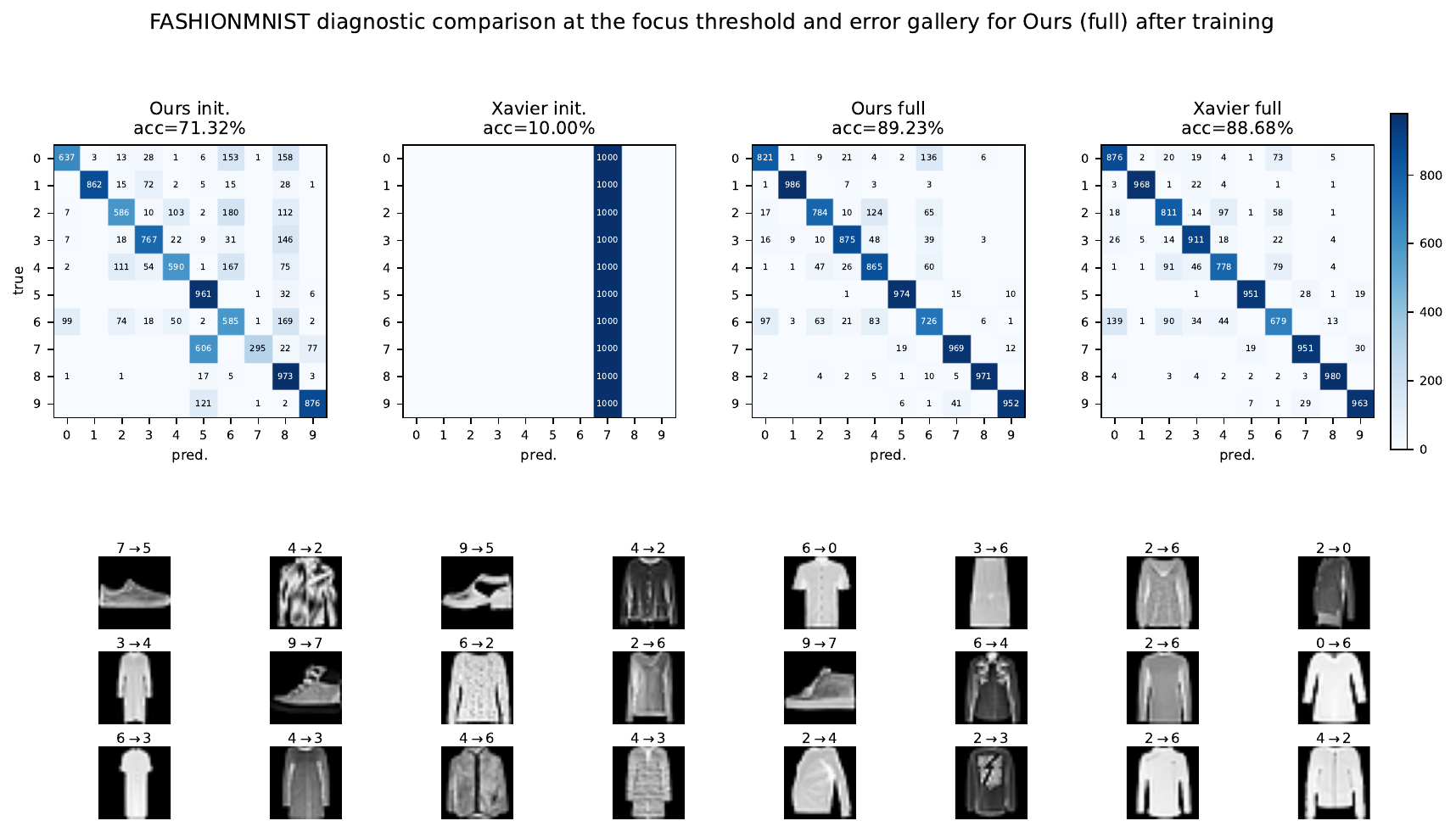}\par\vspace{-2mm}
\safeincludegraphics[width=0.74\linewidth]{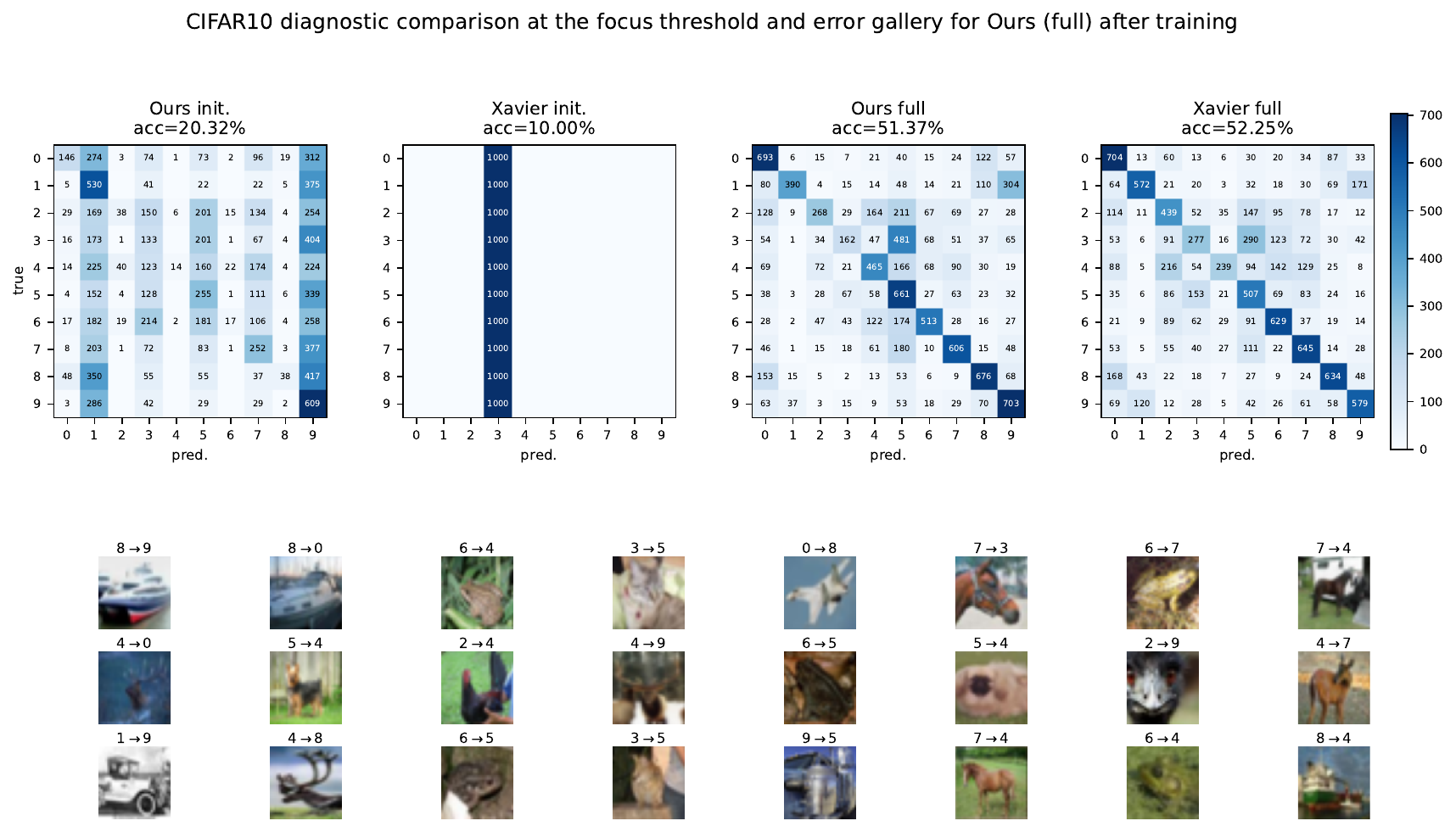}
\caption{Diagnostic comparison on MNIST, Fashion-MNIST, and CIFAR-10, shown from top to bottom. The MNIST and Fashion-MNIST results use \(\tau=0.95\), while CIFAR-10 uses \(\tau=0.90\), where the frozen-hidden representation is strongest. For each dataset, the four confusion matrices compare S-GAI and the matched Xavier baseline before training and after full training. The error gallery shows representative misclassified test images from S-GAI after full training. Across datasets, fully trained S-GAI-initialized and Xavier-initialized models reach similar final accuracy, while the initialized S-GAI slab gates already induce structured predictions before optimization, especially on MNIST and Fashion-MNIST.}
\label{fig:diagnostics}
\end{figure}

\section{Discussion and Limitations}
\label{sec:discussion}

The proposed construction connects two levels of geometry. At the UAT level, half-space gates and finite covers show how a sigmoidal MLP can be programmed from a target region. At the practical image level, class-wise SVD estimates dominant directions and spectral scales directly from samples. The energy threshold \(\tau\) becomes a practical complexity knob: small \(\tau\) gives a coarse class model and fewer gates, while large \(\tau\) retains finer variation and uses more parameters.

The cross-dataset results support this interpretation. On MNIST and Fashion-MNIST, S-GAI-initialized networks are already strongly discriminative before training, and the frozen-hidden protocol shows a clear advantage over frozen random gates. On CIFAR-10, the effect is weaker but still informative under the flattened raw-pixel setting. 

The non-neural SVD-Mahalanobis subspace classifier also provides a useful diagnostic for the applicability of S-GAI. It uses the same class-wise means, spectral directions, spectral scales, and retained ranks before they are compiled into sigmoid slab gates. Therefore, its performance indicates whether the estimated input-space spectral geometry is already discriminative. This helps explain the cross-dataset trend: the method is strongest on MNIST and Fashion-MNIST, where the spectral reference is already informative, and weaker on CIFAR-10, where class identity depends more on local texture, pose, background, and higher-level visual features. The CIFAR-10 result therefore delineates the current regime of the method rather than simply serving as a failure case.

There are also clear limitations. The SVD geometry is class-wise and linear; it does not model nonlinear manifolds inside each class. The slab construction treats retained directions independently and therefore approximates an axis-aligned spectral polytope in each class basis rather than the full data distribution. Finally, all experiments use low-resolution benchmarks and one-hidden-layer sigmoidal MLPs. A natural next step is to transfer the initialization principle to learned feature spaces and architectures with stronger visual inductive bias, such as convolutional neural networks~\cite{LeCun1998,He2016} and Vision Transformers~\cite{Dosovitskiy2021ViT}.

\section{Conclusion}

We presented a geometry-aware initialization framework for sigmoidal MLPs. Starting from the original finite-sum form of UAT, we summarized how smooth half-space and cover gates can be constructed from target geometry. For high-dimensional image data, we estimated class geometry with SVD, formulated a SVD-Mahalanobis subspace classifier as a non-neural geometric reference, and compiled retained spectral directions into pairs of sigmoid slab gates. This gives an explicit data-geometry-to-weights procedure for initializing the hidden layer of a one-hidden-layer sigmoid network.

The experiments compare S-GAI against matched Xavier baselines across MNIST, Fashion-MNIST, and CIFAR-10. Under full training, S-GAI-initialized and Xavier-initialized models reach comparable final accuracy, so the main contribution is not a larger-capacity classifier. The clearest evidence appears before and under constrained training: S-GAI starts from a much more informative state, and when the hidden layer is frozen, class-wise spectral gates are consistently more useful than the matched frozen Xavier baseline. These results support the view that empirical class geometry can be injected into a sigmoidal MLP before gradient-based training, while the CIFAR-10 stress test suggests that future work should extend the same principle to feature-space and convolutional settings.

\bibliographystyle{splncs04}
\bibliography{main}
\end{document}